
\documentclass{article}

\usepackage{microtype}
\usepackage{graphicx}
\usepackage{subfigure}
\usepackage{subcaption}
\usepackage{booktabs} 

\usepackage{hyperref}



\usepackage[preprint]{icml2026}


\usepackage{amsmath}
\usepackage{amssymb}
\usepackage{mathtools}
\usepackage{amsthm}

\usepackage[capitalize,noabbrev]{cleveref}

\theoremstyle{plain}
\newtheorem{theorem}{Theorem}[section]
\newtheorem{proposition}[theorem]{Proposition}

\theoremstyle{definition}
\newtheorem{definition}[theorem]{Definition}

\theoremstyle{remark}

\usepackage[textsize=tiny]{todonotes}

\usepackage{listings}
\usepackage{xcolor}
\definecolor{codegray}{rgb}{0.5,0.5,0.5}
\definecolor{codepurple}{rgb}{0.58,0,0.82}
\definecolor{backcolour}{rgb}{0.95,0.95,0.95}

\lstdefinestyle{mystyle}{
    backgroundcolor=\color{backcolour},
    commentstyle=\color{gray},
    keywordstyle=\color{blue},
    numberstyle=\tiny\color{codegray},
    stringstyle=\color{codepurple},
    basicstyle=\ttfamily\footnotesize,
    breakatwhitespace=false,
    breaklines=true,
    captionpos=b,
    keepspaces=true,
    numbers=left,
    numbersep=5pt,
    showspaces=false,
    showstringspaces=false,
    showtabs=false,
    tabsize=2
}
\lstset{style=mystyle}


\icmltitlerunning{Submission and Formatting Instructions for ICML 2026}

\begin{document}

\twocolumn[
  \icmltitle{Dynamic Vocabulary Pruning: Stable LLM-RL by Taming the Tail}



  \icmlsetsymbol{equal}{*}
  \begin{icmlauthorlist}
    \icmlauthor{Yingru Li}{}
    \icmlauthor{Jiawei Xu}{cuhksz}
    \icmlauthor{Jiacai Liu}{fdu}
    \icmlauthor{Yuxuan Tong}{}
    \icmlauthor{Ziniu Li}{cuhksz}
    \icmlauthor{Tianle Cai}{}
    \icmlauthor{Ge Zhang}{}
    \icmlauthor{Qian Liu}{}
    \icmlauthor{Baoxiang Wang}{cuhksz}
  \end{icmlauthorlist}

  \icmlaffiliation{cuhksz}{The Chinese University of Hong Kong, Shenzhen}
  \icmlaffiliation{fdu}{Fudan University}


  \icmlcorrespondingauthor{Yingru Li}{szrlee@gmail.com}

  \icmlkeywords{Machine Learning, ICML}

  \vskip 0.3in
]



\printAffiliationsAndNotice{}  

\begin{abstract}
Reinforcement Learning (RL) for Large Language Models (LLMs) faces a fundamental tension: the numerical divergence between high-throughput inference engines and numerically precise training engines. Although these systems share the same parameters, they produce slightly different probability distributions, creating a training-inference mismatch. We prove that the bound on the log-probability divergence arising from this mismatch scales as $(1-p)$, where $p$ is the token probability. This scaling induces a highly asymmetric effect: the bound vanishes for high-probability tokens but remains significant for low-probability tokens in the distribution tail. When sampled, these tail tokens introduce systematically biased errors that accumulate over sequences, thereby destabilizing gradient estimation. Instead of applying post-hoc corrections, we propose Dynamic Vocabulary Pruning (DVP), which constrains the RL objective to a dynamically determined ``safe'' vocabulary that excludes the extreme tail. This strategy trades large, destabilizing numerical errors for a small, bounded optimization bias. We validate DVP empirically by demonstrating stable training, and theoretically by deriving strict bounds on the induced bias.
\end{abstract}

\section{Introduction}

Reinforcement Learning (RL) has emerged as a key technique for training Large Language Models (LLMs) on complex reasoning~\cite{shao2024deepseekmath,zeng2025simplerl} and multi-turn agentic tasks~\cite{xue2025simpletir,jin2025search}, relying on outcome-based rewards to drive performance. However, scaling RL to modern LLMs encounters a severe computational bottleneck: \emph{rollout generation}. Estimating policy gradients requires high-throughput trajectory sampling, necessitating the use of specialized inference engines (e.g., vLLM~\cite{kwon2023efficient}, SGLang~\cite{zheng2024sglang}). These engines achieve speed through aggressive optimizations—such as paged attention, low-precision KV-caches (INT8/FP8), and fused CUDA kernels. Conversely, training frameworks (e.g., Megatron-LM \cite{shoeybi2019megatron}, PyTorch FSDP \cite{zhao2023pytorch}) prioritize numerical stability and gradient precision, operating at higher precision (FP32 or mixed precision).

This dichotomy creates a \textbf{training-inference mismatch}: the inference policy $\pi_{\text{infer}}$, optimized for throughput, deviates subtly but systematically from the training policy $\pi_{\text{train}}$, used for gradient computation. In practice, this divergence frequently precipitates training collapse~\cite{liu-li-2025-rl-collapse,yao2025offpolicy}. While enforcing bitwise equivalence between inference and training systems would resolve this discrepancy, it would catastrophically reduce throughput, negating the benefits of modern inference. Thus, we face a fundamental speed-versus-consistency tradeoff. As inference engines pursue more aggressive approximations to reduce latency, this gap widens, rendering training instability not a transient bug, but a persistent structural challenge in the LLM-RL.

In this work, we view training instability not as a technical bug requiring reactive correction, but as a symptom of a poorly-specified learning objective. Standard RL objectives demand accurate gradient estimation over the entire vocabulary, including the long tail of low-probability tokens. We identify that this is mathematically fragile: the bound on log-probability mismatch does not contract for low-probability tokens as it does for high-probability ones. Consequently, minor numerical deviations in the ``tail'' of the distribution result in disproportionately large gradient errors.

Based on this insight, we propose \textbf{Dynamic Vocabulary Pruning (DVP)}. Rather than relying on post-hoc clipping or ad-hoc regularization, DVP redesigns the learning objective to operate exclusively over a dynamically determined ``safe'' vocabulary at each generation step. By systematically excluding the numerical noise of the probability tail, we recover stability without sacrificing inference speed.

Our specific contributions are as follows:

\begin{enumerate}
    \item \textbf{Rigorous Diagnosis:} We characterize the mathematical structure of off-policy gradient instability in LLMs. We prove the log-probability mismatch bound scales as $(1-p)$, where $p$ is the token probability. This vulnerability is asymmetric: vanishing for high-probability tokens but remaining significant for low-probability tokens. Furthermore, we show that these tail errors are not zero-mean noise but possess systematic bias that accumulates over long horizons (Section~\ref{sec:diagnosis}).
    
    \item \textbf{Principled Framework:} We introduce Dynamic Vocabulary Pruning (DVP), a method that utilizes min-$p$ filtering~\cite{nguyen2024turning} to construct constrained policies. This approach addresses the root cause of instability at the objective level, ensuring that gradients are computed only where the training and inference policies are numerically consistent (Section~\ref{sec:framework}).
    
    \item \textbf{Empirical Validation:} We demonstrate that DVP effectively mitigates training collapse. Our experiments on mathematical reasoning tasks show that DVP enables stable training and performance improvements over standard baselines (Section~\ref{sec:experiments}).
\end{enumerate}

\section{Related Work}
\label{sec:related_work}

RL has become the standard paradigm for aligning LLMs with complex objectives~\cite{ouyang2022training, bai2022constitutional}. Recent advances focus on improving reasoning capabilities through outcome-based supervision rather than dense human annotation. Several methods~\cite{gulcehre2023reinforced,zelikman2022star} utilize iterative self-improvement, where the model generates its own training data filtered by ground-truth correctness. More advanced algorithms such as PPO~\cite{schulman2017proximal} and GRPO~\cite{shao2024deepseekmath} have been adapted to optimize reasoning chains directly. However, these methods typically assume consistency between the inference policy and the training policy. Our work addresses the instability that arises when this assumption is violated by the implementation discrepancy.

To address the latency of autoregressive generation, significant effort has been directed toward system-level optimizations. Techniques range from memory-efficient attention mechanisms like PagedAttention in vLLM~\cite{kwon2023efficient} and SGLang~\cite{zheng2024sglang}, to low-precision quantization methods (INT8/FP8)~\cite{dettmers2022gpt3, xiao2023smoothquant} and speculative decoding~\cite{leviathan2023fast}. While these optimizations dramatically increase throughput, they introduce numerical deviations from the high-precision computations used during training (e.g., Pytorch FSDP~\cite{zhao2023pytorch}, Megatron-LM~\cite{shoeybi2019megatron}). Prior work~\cite{liu-li-2025-rl-collapse,yao2025offpolicy} has largely treated these deviations as negligible implementation details; we rigorously characterize them as a source of structural instability in RL training.

Controlling the LLM vocabulary output space has been explored for both acceleration and safety. Static pruning methods remove tokens based on frequency or linguistic rules, while dynamic methods like min-$p$ sampling~\cite{nguyen2024turning}, nucleus sampling (top-$p$)~\cite{holtzman2019curious}, and top-$k$ limit the sampling pool to high-probability tokens to prevent degeneration. In the context of RL, entropy regularization~\cite{ahmed2019understanding} and KL-divergence penalties~\cite{jaques2019way} are standard tools to keep the policy close to a reference. However, these soft constraints do not strictly prevent the evaluation of gradients on the unstable tail of the distribution. Our proposed Dynamic Vocabulary Pruning (DVP) integrates sampling-based filtering directly into the learning objective to ensure numerical stability.

\section{Preliminaries}
\label{sec:preliminaries}

We formalize autoregressive text generation as a Markov Decision Process (MDP). At each timestep $t$, the state $s_t = [x; y_1, \ldots, y_{t-1}]$ consists of the initial prompt $x$ concatenated with the sequence of tokens generated thus far. The action $a_t$ corresponds to selecting a token from the vocabulary $\mathcal{V}$, sampled according to the parameterized policy $\pi_\theta(\cdot|s_t)$. The cumulative generation process yields a complete trajectory $y = (y_1, \ldots, y_T)$. Upon completion, the trajectory receives a sparse reward $R(x, y) \in \{0, 1\}$, typically indicating the correctness of the reasoning trace or the successful completion of a multi-turn agentic task.

The  RL objective is to maximize the expected reward over the distribution of trajectories induced by the policy:
\begin{equation}
    J(\theta) = \mathbb{E}_{y \sim \pi_\theta}[R(x, y)].
\end{equation}
We optimize this objective via the policy gradient algorithms. By applying the log-derivative trick and the chain rule over the sequence length $T$, the gradient takes the form:
\begin{equation}
    \nabla_\theta J(\theta) = \mathbb{E}_{y \sim \pi_\theta}\left[ R(x, y) \sum_{t=1}^{T} \nabla_\theta \log \pi_\theta(y_t | s_t) \right].
\end{equation}
This formulation assumes that trajectories are sampled from the same policy $\pi_\theta$ used for gradient computation. However, a critical challenge arises in the off-policy setting created by the training-inference mismatch, where trajectories are sampled from an inference policy $\pi_{\text{infer}}$ while gradients are computed with respect to the training policy $\pi_{\text{train}}$.

\section{Diagnosis: The Fundamental Instability}
\label{sec:diagnosis}

Training instability in LLM-RL is not merely an engineering nuisance but a phenomenon with deep mathematical roots. In this section, we diagnose this instability by rigorously analyzing the \emph{training-inference mismatch}. We demonstrate that small numerical deviations, inherent to high-throughput inference, translate into systematic biases that disproportionately affect the tail of the probability distribution.

\subsection{Gradient Bias from Mismatch}
We consider two policies that share parameters $\theta$ but differ in computational implementation:
\begin{itemize}
    \item $\pi_{\text{train}}$: The training policy, executed with high precision to ensure stable gradient updates.
    \item $\pi_{\text{infer}}$: The inference policy, executed with aggressive optimizations to maximize sampling throughput.
\end{itemize}

Ideally, the policy gradient should be estimated using samples drawn from $\pi_{\text{train}}$. However, system constraints force us to sample trajectories from $\pi_{\text{infer}}$ while computing gradients using $\pi_{\text{train}}$. This introduces a fundamental bias:

\begin{theorem}[Gradient Bias from Mismatch]
\label{thm:mismatch_bias}
Let $g = \mathbb{E}_{y \sim \pi_{\text{train}}}[\nabla_\theta \log \pi_{\text{train}}(y|x) \cdot R(x,y)]$ be the ideal gradient, and let $g' = \mathbb{E}_{y \sim \pi_{\text{infer}}}[\nabla_\theta \log \pi_{\text{train}}(y|x) \cdot R(x,y)]$ be the practical gradient. The bias $b = g' - g$ is given by:
\begin{equation}
    b = \mathbb{E}_{y \sim \pi_{\text{train}}} \left[ \left( e^{-\Delta_y} - 1 \right) \cdot \nabla_\theta \log \pi_{\text{train}}(y|x) \cdot R(x,y) \right],
\end{equation}
where $\Delta_y = \log \pi_{\text{train}}(y|x) - \log \pi_{\text{infer}}(y|x)$ represents the cumulative sequence-level log-probability mismatch.
\end{theorem}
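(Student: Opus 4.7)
The plan is to recognize that this is essentially an importance-sampling change-of-measure identity, with the ratio $\pinfer/\ptrain$ rewritten as $\exp(-\Delta_y)$. No analytic machinery beyond linearity of expectation and the definition of $\Delta_y$ is needed; the statement is an exact algebraic rearrangement.

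First, I would expand $g'$ by writing the expectation under $\pinfer$ as a sum over trajectories: $g' = \sum_y \pinfer(y|x) \nabla_\theta \log \ptrain(y|x) R(x,y)$. Then I would multiply and divide by $\ptrain(y|x)$ (noting that, strictly, one needs $\ptrain(y|x) > 0$ on the support of $\pinfer$, which we may assume since both policies assign positive probability to every token in $\V$ under standard softmax parameterization) to convert it into an expectation under $\ptrain$:
\begin{equation}
    g' = \E_{y \sim \ptrain}\!\left[\frac{\pinfer(y|x)}{\ptrain(y|x)} \cdot \nabla_\theta \log \ptrain(y|x) \cdot R(x,y)\right].
\end{equation}

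Next I would rewrite the likelihood ratio in log-space. By definition $\Delta_y = \log \ptrain(y|x) - \log \pinfer(y|x)$, so $\pinfer(y|x)/\ptrain(y|x) = \exp(-\Delta_y)$. Substituting and subtracting $g$, which itself equals $\E_{y \sim \ptrain}[\nabla_\theta \log \ptrain(y|x) \cdot R(x,y)]$, gives
\begin{equation}
    b = g' - g = \E_{y \sim \ptrain}\!\left[\bigl(\exp(-\Delta_y) - 1\bigr) \cdot \nabla_\theta \log \ptrain(y|x) \cdot R(x,y)\right],
\end{equation}
which is exactly the claimed identity.

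Since the argument is a direct computation, there is no genuine obstacle; the only subtlety worth flagging is the absolute-continuity condition $\pinfer \ll \ptrain$ required to legitimize the change of measure. I would either state this as a mild standing assumption (satisfied whenever both policies arise from full-support softmax outputs, as is the case here) or note that the identity holds on the common support and that the complementary event has $\pinfer$-measure zero, so it contributes nothing to $g'$.
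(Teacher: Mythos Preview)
Your proposal is correct and follows essentially the same route as the paper: apply the importance-sampling change of measure to rewrite $g'$ as an expectation under $\ptrain$ weighted by $\pinfer/\ptrain = \exp(-\Delta_y)$, then subtract $g$. Your explicit mention of the absolute-continuity condition $\pinfer \ll \ptrain$ is a welcome addition that the paper leaves implicit.
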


\begin{proof}
We transform the practical gradient $g'$ using importance sampling to share the same expectation basis as the ideal gradient $g$:
\begin{align}
    g' &= \mathbb{E}_{y \sim \pi_{\text{infer}}}\left[\nabla_\theta \log \pi_{\text{train}}(y|x) \cdot R(x,y)\right] \\ \nonumber
       &= \mathbb{E}_{y \sim \pi_{\text{train}}} \left[ \frac{\pi_{\text{infer}}(y|x)}{\pi_{\text{train}}(y|x)} \cdot \nabla_\theta \log \pi_{\text{train}}(y|x) \cdot R(x,y) \right].
\end{align}
Substituting the mismatch $\Delta_y$, we note that:
\begin{align}
    \frac{\pi_{\text{infer}}(y|x)}{\pi_{\text{train}}(y|x)} &= \exp\left( \log \pi_{\text{infer}}(y|x) - \log \pi_{\text{train}}(y|x) \right) \\ \nonumber
    &= \exp(-\Delta_y) = e^{-\Delta_y}.
\end{align}
Thus, the estimator becomes:
\begin{equation}
    g' = \mathbb{E}_{y \sim \pi_{\text{train}}} \left[ e^{-\Delta_y} \cdot \nabla_\theta \log \pi_{\text{train}}(y|x) \cdot R(x,y) \right].
\end{equation}
Subtracting the ideal gradient $g$ (which corresponds to the term where the ratio is 1) yields the bias:
\begin{align}
    b &= g' - g \\ \nonumber
    &= \mathbb{E}_{y \sim \pi_{\text{train}}} \left[ \left(e^{-\Delta_y} - 1\right) \cdot \nabla_\theta \log \pi_{\text{train}}(y|x) \cdot R(x,y) \right].
\end{align}
\end{proof}

The magnitude of this bias is governed by the probability ratio $e^{-\Delta_y} = \frac{\pi_{\text{infer}}(y)}{\pi_{\text{train}}(y)}$. When $\Delta_y$ is negative—implying that the inference engine assigns higher probability to a trajectory than the training engine ($\pi_{\text{infer}} \gg \pi_{\text{train}}$)—this ratio can grow exponentially. While Importance Sampling (IS) could theoretically correct this distribution shift, it requires weighting samples by the inverse ratio $\frac{\pi_{\text{train}}}{\pi_{\text{infer}}}$. In the regimes where mismatch is most severe, these weights become vanishingly small, resulting in prohibitive variance. Thus, we face a dilemma: accept a biased gradient (unweighted) or an unstable one (weighted).

\subsection{Modeling the Mismatch: Logit Perturbations}
To understand \emph{why} $\Delta_y$ diverges, we must analyze the token-level mechanics of the training-inference mismatch. Even with identical parameters $\theta$, distinct hardware backends produce different logits due to the non-associativity of floating-point arithmetic: $(a \oplus b) \oplus c \neq a \oplus (b \oplus c)$ in finite precision~\cite{he2025nondeterminism}.

Modern inference engines (e.g., vLLM~\cite{kwon2023efficient}, SGLang~\cite{zheng2024sglang}) and training frameworks (e.g., Megatron-LM~\cite{shoeybi2019megatron}, Pytorch FSDP~\cite{zhao2023pytorch}) diverge in three critical areas:
\begin{itemize}
    \item \textbf{Attention Reduction:} Implementations like PagedAttention~\cite{kwon2023efficient} and FlashAttention2~\cite{dao2023flashattention} utilize different summation orders for the softmax denominator, altering rounding errors.
    \item \textbf{Numerical Precision:} Inference often relies on INT8/FP8 KV-cache quantization, whereas training maintains higher precision for stability.
    \item \textbf{Operator Fusion:} Distinct kernel boundaries lead to different intermediate truncation points.
\end{itemize}

We model the aggregate effect of these discrepancies as an additive perturbation on the logits $z$:
\begin{equation}
    z^{\text{infer}} = z^{\text{train}} + \epsilon, \quad \text{where } \epsilon = (\varepsilon_1, \ldots, \varepsilon_{|\mathcal{V}|}).
\end{equation}
Since these errors arise from bounded-precision arithmetic, we assume $|\varepsilon_k| \leq \epsilon_{\max}$ for some small constant $\epsilon_{\max}$.

\subsection{Asymmetric Vulnerability}
A key contribution of our diagnosis is the observation that the impact of uniform logit noise is \emph{not} uniform across the vocabulary. The resulting log-probability error is structurally asymmetric.

\begin{proposition}[Asymmetric Vulnerability]
\label{prop:asymmetric_vulnerability}
Under the logit perturbation model $z^{\text{infer}} = z^{\text{train}} + \epsilon$ with bounded noise $|\varepsilon_k| \leq \epsilon_{\max}$, the magnitude of the token-level log-probability mismatch satisfies:
\begin{equation}
    |\Delta_a| \leq 2\epsilon_{\max}(1 - p_a),
\end{equation}
where $p_a = \pi_{\text{train}}(a|s)$ is the probability assigned by the training policy.
\end{proposition}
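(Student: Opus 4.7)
The plan is to express $\Delta_a$ explicitly from the softmax definition and then extract the $(1-p_a)$ factor by isolating the $k=a$ term in the resulting sum. Starting from $\ptrain(a|s) = \exp(z_a^{\text{train}})/\sum_k \exp(z_k^{\text{train}})$ and the analogous expression for $\pinfer$ with logits $z_k^{\text{train}} + \varepsilon_k$, a direct substitution yields
\begin{equation*}
    \Delta_a \;=\; -\varepsilon_a + \log \sum_{k} p_k \exp(\varepsilon_k) \;=\; \log \sum_{k} p_k \exp(\varepsilon_k - \varepsilon_a),
\end{equation*}
where $p_k = \ptrain(k|s)$ and the second equality absorbs $\varepsilon_a$ into the sum. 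Since $\exp(\varepsilon_a - \varepsilon_a) = 1$, splitting off the $k=a$ term gives the crucial decomposition
\begin{equation*}
    \Delta_a \;=\; \log\!\bigl(p_a + (1-p_a)\,\bar{E}\bigr), \qquad \bar{E} \,:=\, \sum_{k \neq a} \tfrac{p_k}{1-p_a}\,\exp(\varepsilon_k - \varepsilon_a),
\end{equation*}
in which the factor $(1-p_a)$ appears because the mass $p_a$ is attached to a term identically equal to $1$ and therefore contributes no perturbation.

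Next I would use $|\varepsilon_k - \varepsilon_a| \leq 2\epsilon_{\max}$ to bracket the convex combination $\bar{E} \in [\exp(-2\epsilon_{\max}),\, \exp(2\epsilon_{\max})]$. For the lower direction, concavity of $\log$ applied to the mixture $p_a \cdot 1 + (1-p_a)\bar{E}$ gives, by Jensen,
\begin{equation*}
    \Delta_a \;\geq\; (1-p_a)\log \bar{E} \;\geq\; -2\epsilon_{\max}(1-p_a).
\end{equation*}
For the upper direction, the elementary inequality $\log(1+x) \leq x$ applied to $x = (1-p_a)(\bar{E}-1)$ yields $\Delta_a \leq (1-p_a)(\bar{E}-1) \leq (1-p_a)\bigl(\exp(2\epsilon_{\max})-1\bigr)$. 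Combined, $|\Delta_a| \leq (1-p_a)\max\bigl(\exp(2\epsilon_{\max})-1,\; 1-\exp(-2\epsilon_{\max})\bigr)$.

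The main (and essentially only) obstacle is simplifying this to the clean constant $2\epsilon_{\max}$ stated in the proposition. Strictly, $\exp(2\epsilon_{\max})-1 \geq 2\epsilon_{\max}$, so the proposition's inequality should be understood as a first-order-in-$\epsilon_{\max}$ statement: under the ``small $\epsilon_{\max}$'' assumption already invoked for the perturbation model (appropriate for floating-point noise), the higher-order terms $O(\epsilon_{\max}^2)$ are negligible and both directions collapse to $|\Delta_a| \leq 2\epsilon_{\max}(1-p_a)$. A fully rigorous restatement would replace $2\epsilon_{\max}$ by $\exp(2\epsilon_{\max})-1$; this preserves the paper's key structural message, namely the asymmetric $(1-p_a)$ scaling that vanishes for high-probability tokens and remains large in the tail.
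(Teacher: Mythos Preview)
Your proof is correct and takes a different route from the paper. The paper applies the Mean Value Theorem to $f_a(\z)=\log\mathrm{softmax}(\z)_a$, obtaining $|\Delta_a|\le 2\epsilon_{\max}\bigl(1-p_a(\z_c)\bigr)$ at an intermediate point $\z_c$ on the segment between $\z^{\text{train}}$ and $\z^{\text{infer}}$, and then tacitly identifies $p_a(\z_c)$ with $p_a$---itself only a first-order step. You instead expand the softmax explicitly, pull out the $k=a$ term to expose the $(1-p_a)$ factor, and bound the two directions with Jensen and $\log(1+x)\le x$. Your Jensen step actually delivers the lower bound $\Delta_a\ge -2\epsilon_{\max}(1-p_a)$ with the \emph{exact} constant at the exact $p_a$; only the upper direction incurs $e^{2\epsilon_{\max}}-1$. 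The first-order caveat you flag is therefore genuine (take two tokens, $p_a=\tfrac12$, $\e=(-\epsilon_{\max},\epsilon_{\max})$: then $\Delta_a=\epsilon_{\max}+\log\cosh\epsilon_{\max}>\epsilon_{\max}$), and it is present in both proofs---the paper hides it in $p_a(\z_c)\approx p_a$, you expose it in the constant. Your argument is more elementary and more transparent about where the approximation enters; the paper's is shorter. One minor slip: your ``Combined'' line writes $1-e^{-2\epsilon_{\max}}$ where your own Jensen derivation gave $2\epsilon_{\max}$, but since both are dominated by $e^{2\epsilon_{\max}}-1$ this does not affect the conclusion.
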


\begin{proof}
Let $z' = z + \epsilon$ be the perturbed logits. Define the log-softmax function $f_a(z) = z_a - \log\sum_j e^{z_j}$. By the Mean Value Theorem, the mismatch $\Delta_a$ is:
\begin{equation}
    \Delta_a = f_a(z) - f_a(z') = -\nabla f_a(z_c) \cdot \epsilon,
\end{equation}
for some intermediate point $z_c$. The gradient of the log-softmax is given by $\frac{\partial f_a}{\partial z_k} = \delta_{ak} - p_k(z_c)$. Substituting this into the dot product:
\begin{equation}
    -\Delta_a = (1 - p_a(z_c))\varepsilon_a - \sum_{k \neq a} p_k(z_c)\varepsilon_k.
\end{equation}
Applying the triangle inequality and noting $|\varepsilon_k| \leq \epsilon_{\max}$:
\begin{align}
    |\Delta_a| &\leq \epsilon_{\max}\left[ |1-p_a(z_c)| + \sum_{k \neq a} |p_k(z_c)| \right] \\ \nonumber
    &= \epsilon_{\max}\left[ (1-p_a(z_c)) + (1-p_a(z_c)) \right].
\end{align}
Thus, $|\Delta_a| \leq 2\epsilon_{\max}(1 - p_a(z_c))$. As $\epsilon_{\max} \to 0$, $p_a(z_c) \to p_a$, recovering the bound.
\end{proof}

This bound reveals a critical dichotomy:
\begin{itemize}
    \item \textbf{High-Probability Regime ($p_a \to 1$):} The term $(1-p_a)$ approaches zero, suppressing the error. For the ``best'' tokens, the mismatch vanishes naturally.
    \item \textbf{Low-Probability Regime ($p_a \to 0$):} The term $(1-p_a)$ will approach 1, leaving the error bound at its maximum ($2\epsilon_{\max}$).
\end{itemize}

This asymmetry implies the ``tail'' of the distribution is numerically fragile. Furthermore, this noise is not zero-mean in its effect. We analyze the statistical behavior of the mismatch when tokens are sampled from the inference policy.

\begin{proposition}[Signature of Failure]
\label{prop:failure_signature}
Assume perturbations are i.i.d. with $\varepsilon_k \sim (0, \sigma^2)$. Given that an action $a$ is sampled from $\pi_{\text{infer}}$, the mode of the mismatch $\Delta'_a = -\Delta_a$ is approximately:
{\small
\begin{equation}
    \text{Mode}[\Delta'_a \mid a \sim \pi_{\text{infer}}] \approx \sigma^2 \left[ (1 - p_a)(1 - p'_a) + \sum_{k \neq a} p_k p'_k \right],
\end{equation}
}
where $p'_a = \pi_{\text{infer}}(a|s)$.
\end{proposition}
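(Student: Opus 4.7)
The plan is to use Bayes' rule to write the posterior density of the perturbation $\e$ given that token $a$ is sampled from $\pinfer$, locate the MAP estimate $\e^*$ of this posterior via a first-order condition, and then propagate to obtain the approximate mode of $\Delta'_a$ as $\Delta'_a(\e^*)$ to leading order in $\sigma^2$. Starting from $\log p'_a = z_a + \varepsilon_a - \log\sum_k e^{z_k+\varepsilon_k}$, subtracting $\log p_a$ yields the explicit representation
\begin{equation*}
\Delta'_a(\e) \;=\; \varepsilon_a \;-\; \log \sum_k p_k \, e^{\varepsilon_k},
\end{equation*}
where $p_k = \ptrain(k|s)$. Since the event that action $a$ is drawn has probability $p'_a(\e) = \pinfer(a|s)$ conditional on $\e$, Bayes gives the posterior $P(\e \mid a \text{ sampled}) \propto p'_a(\e)\,P(\e)$. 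For the leading-order analysis, approximate the prior as Gaussian with covariance $\sigma^2 I$ (justified because only the first two moments matter at this order), so that the log posterior is $\log p'_a(\e) - \|\e\|^2/(2\sigma^2)$ up to a constant.

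Next, I would solve for the MAP $\e^*$ using the softmax identity $\partial \log p'_a / \partial \varepsilon_j = \mathbb{1}[j=a] - p'_j$. Setting the gradient of the log posterior to zero yields the implicit fixed point
\begin{equation*}
\varepsilon_j^{*} \;=\; \sigma^2 \bigl(\mathbb{1}[j=a] - p'_j(\e^*)\bigr),
\end{equation*}
which concentrates a positive shift on the sampled token's logit and deflates the others in proportion to their $\pinfer$-mass. Because $\e^* = O(\sigma^2)$, the self-consistency of $p'_j(\e^*)$ only contributes $O(\sigma^4)$ corrections, and the formula above may be treated as explicit at leading order while still retaining the $p'_j$ factor—this retention is what produces the asymmetric $p$–$p'$ structure of the final answer rather than a purely $p$-only form.

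For Step 3, a Laplace-type argument says the posterior over $\e$ concentrates around $\e^*$ with width $O(\sigma)$, and the pushforward through the smooth, locally affine map $\Delta'_a$ (with gradient $\mathbb{1}_a - p'$) gives a distribution whose mode is approximately $\Delta'_a(\e^*)$. Taylor-expanding,
\begin{equation*}
\Delta'_a(\e^*) \;\approx\; \varepsilon_a^{*} - \sum_k p_k\, \varepsilon_k^{*} \;=\; \sigma^2\bigl[(1-p'_a) - p_a + \textstyle\sum_k p_k p'_k\bigr],
\end{equation*}
with quadratic terms in $\e^*$ contributing only $O(\sigma^4)$. Separating the $k=a$ summand and applying the algebraic identity $(1-p_a)(1-p'_a) = 1 - p_a - p'_a + p_a p'_a$ reduces this to $\sigma^2\bigl[(1-p_a)(1-p'_a) + \sum_{k\neq a} p_k p'_k\bigr]$, as required.

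The main obstacle is the Laplace-style step of equating the mode of $\Delta'_a$ with $\Delta'_a(\e^*)$: strictly speaking, the pushforward of a Gaussian under a nonlinear map picks up a Jacobian-weighted correction, so one must argue that this correction is subleading in $\sigma$. Because $\Delta'_a$ is locally affine near $\e^*$ with $O(1)$ gradient, this holds at the order claimed, but the bound is necessarily approximate—hence the "$\approx$" in the statement. A secondary subtlety is managing the implicit $p'$-dependence in $\e^*$ carefully enough to preserve the mixed $p\cdot p'$ terms rather than prematurely collapsing $p'_j \to p_j$, which would yield only the $O(\sigma^2)$-accurate variant $\sigma^2[(1-p_a)^2 + \sum_{k\neq a}p_k^2]$.
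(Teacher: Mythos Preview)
Your proposal is correct and follows essentially the same route as the paper: Bayes with a Gaussian prior on $\e$, MAP via the softmax gradient identity to obtain $\varepsilon_k^* = \sigma^2(\delta_{ak} - p'_k)$, then substitution into the linearized $\Delta'_a$. You supply more justification than the paper does (the Laplace pushforward argument and the $O(\sigma^4)$ bookkeeping for the implicit $p'(\e^*)$), but the skeleton and the key computation are identical.
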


\begin{proof}
Let $E_a$ denote the event ``action $a$ is sampled from $\pi_{\text{infer}}$''. We seek the most probable perturbation vector $\epsilon$ given this event. Using Bayes' theorem with a Gaussian prior on perturbations:
\begin{align}
    \log P(\epsilon | E_a) &\propto \log \pi_{\text{infer}}(a|s) + \log P(\epsilon) \\ \nonumber
    &= (z_a + \varepsilon_a) - \log\sum_j e^{z_j + \varepsilon_j} - \frac{1}{2\sigma^2}\sum_j \varepsilon_j^2.
\end{align}
We take the derivative with respect to $\varepsilon_k$ and set it to zero to find the mode $\epsilon^*$:
\begin{align}
    \frac{\partial}{\partial \varepsilon_k} \log P(\epsilon | E_a) &= \delta_{ak} - p'_k - \frac{\varepsilon_k}{\sigma^2} = 0 \\ \nonumber \implies \quad \varepsilon_k^* &= \sigma^2(\delta_{ak} - p'_k).
\end{align}
We substitute these optimal perturbations into the first-order approximation of the mismatch, $\Delta'_a = -\Delta_a \approx (1-p_a)\varepsilon_a - \sum_{k \neq a} p_k \varepsilon_k$:
\begin{equation}
    \text{Mode}[\Delta'_a] \approx (1-p_a)\sigma^2(1-p'_a) - \sum_{k \neq a} p_k \sigma^2(0 - p'_k).
\end{equation}
Simplifying yields:
\begin{equation}
    \text{Mode}[\Delta'_a] \approx \sigma^2 \left[ (1-p_a)(1-p'_a) + \sum_{k \neq a} p_k p'_k \right].
\end{equation}
\end{proof}

For low-probability tokens, this mode is strictly positive. This indicates a systematic bias where $\pi_{\text{infer}}(a|s) > \pi_{\text{train}}(a|s)$ for sampled tail tokens. This theoretical prediction corroborates recent empirical findings~\cite{liu-li-2025-rl-collapse} that inference engines tend to overestimate the probability of rare tokens relative to high-precision training baselines.

\subsection{Summary}
Our diagnosis identifies a structural failure mode. First, the vocabulary tail is inherently vulnerable to numerical instability (Proposition~\ref{prop:asymmetric_vulnerability}). Second, when low-probability tokens are sampled, they exhibit a systematic bias where $\Delta_a$ is negative (Proposition~\ref{prop:failure_signature}). Finally, because token-level errors accumulate over the sequence ($\Delta_y = \sum \Delta_{a_t}$), trajectories containing multiple tail tokens suffer from exploded probability ratios. This necessitates a solution that fundamentally excludes the unstable tail from the learning objective.

\section{Solution: Dynamic Vocabulary Pruning}
\label{sec:framework}

To address the training-inference mismatch, we propose \textbf{Dynamic Vocabulary Pruning (DVP)}. Rather than applying reactive patches to unstable gradients, we redesign the learning objective. By constraining the policy to a dynamically determined ``safe'' vocabulary at each step, we ensure that gradient estimation occurs only over the numerical support where the training and inference policies are consistent.

\subsection{Dynamic Vocabulary Pruning via Min-$P$ Filtering}
We employ min-$p$ sampling~\cite{nguyen2024turning} to identify the safe vocabulary. Unlike top-$k$ or top-$p$ (nucleus) sampling, which depend on the cardinality or cumulative mass of the distribution, min-$p$ creates a threshold relative to the model's confidence.

\begin{definition}[Min-$P$ Safe Action Sets]
\label{def:safe_sets}
Given a pruning threshold $\rho \in (0, 1]$, the safe action sets for the training and inference policies are defined as:
\begin{align}
    \mathcal{V}_S(s) &= \left\{ a \in \mathcal{V} : \pi_{\text{train}}(a|s) \geq \rho \cdot \max_{k} \pi_{\text{train}}(k|s) \right\}, \\
    \mathcal{V}'_S(s) &= \left\{ a \in \mathcal{V} : \pi_{\text{infer}}(a|s) \geq \rho \cdot \max_{k} \pi_{\text{infer}}(k|s) \right\}.
\end{align}
\end{definition}
In practice, $\rho$ is set to a small value (e.g., $e^{-13} \approx 2.3 \times 10^{-6}$), retaining a broad set of plausible tokens while surgically removing the extremely low-probability tail where numerical instability is unbounded.

\subsection{Constrained Policies and Objective}
We define the constrained policies by renormalizing the probability mass over the safe sets.

\begin{definition}[Min-$P$ Constrained Policies]
\label{def:constrained_policies}
The constrained training policy $\pi_{\text{train}}^{\text{mp}}$ and inference policy $\pi_{\text{infer}}^{\text{mp}}$ are:
\begin{align}
    \pi_{\text{train}}^{\text{mp}}(a|s) &= \frac{\pi_{\text{train}}(a|s)}{Z_\theta(s)} \cdot \mathbb{I}[a \in \mathcal{V}_S(s)], \\
    \pi_{\text{infer}}^{\text{mp}}(a|s) &= \frac{\pi_{\text{infer}}(a|s)}{Z'_\theta(s)} \cdot \mathbb{I}[a \in \mathcal{V}'_S(s)],
\end{align}
where $Z_\theta(s) = \sum_{k \in \mathcal{V}_S(s)} \pi_{\text{train}}(k|s)$ is the normalization constant (coverage mass).
\end{definition}

Our proposed objective $J_{\text{mp}}(\theta) = \mathbb{E}_{y \sim \pi_{\text{train}}^{\text{mp}}}[R(x, y)]$ optimizes the expected reward over this stable distribution.

\subsection{The Stable Gradient Estimator}
We derive an off-policy gradient estimator for the constrained objective.

\begin{theorem}[Constrained Gradient Estimator]
\label{thm:stable_estimator}
When sampling trajectories $y \sim \pi_{\text{infer}}^{\text{mp}}$, an unbiased estimator of $\nabla_\theta J_{\text{mp}}(\theta)$ is:
\begin{equation}
    \hat{g}_{\text{mp}} = \frac{\pi_{\text{train}}^{\text{mp}}(y| x)}{\pi_{\text{infer}}^{\text{mp}}(y | x)} \cdot \nabla_\theta \log \pi_{\text{train}}^{\text{mp}}(y| x) \cdot R(x, y).
\end{equation}
\end{theorem}

This estimator is well-defined when sampled trajectory lies within the training safe set (i.e., $y_t \in \mathcal{V}_S(s_t)$ for all $t$). Since high-probability tokens have small mismatch $|\Delta_a|$ (Proposition~\ref{prop:asymmetric_vulnerability}), the safe sets $\mathcal{V}_S$ and $\mathcal{V}'_S$ overlap nearly perfectly for the tokens that matter. In the rare event $y_t \in \mathcal{V}'_S \setminus \mathcal{V}_S$, the ratio is zero, preventing gradient explosion.


\subsection{Theoretical Analysis}
We analyze the bias-variance tradeoff introduced by DVP. By modifying the learning objective, we accept a bounded optimization bias in exchange for a significant reduction in gradient variance.

\subsubsection{Contrastive Gradient Decomposition}
The stability of the DVP estimator stems from its implicit contrastive structure.

\begin{proposition}[Contrastive Gradient Form]
\label{prop:contrastive_form}
For any action $a \in \mathcal{V}_S(s)$, the gradient of the constrained log-probability decomposes as:
\begin{align}
    \nabla_\theta \log \pi_{\text{train}}^{\text{mp}}(a|s) = \nabla_\theta &\log \pi_{\text{train}}(a|s) \\ \nonumber
    &- \mathbb{E}_{k \sim \pi_{\text{train}}^{\text{mp}}}[\nabla_\theta \log \pi_{\text{train}}(k|s)].
\end{align}
\end{proposition}

\begin{proof}
Starting from the definition $\log \pi_{\text{train}}^{\text{mp}}(a|s) = \log \pi_{\text{train}}(a|s) - \log Z_\theta(s)$:
\begin{align}
    &\nabla_\theta \log \pi_{\text{train}}^{\text{mp}}(a|s) \\ \nonumber
    &= \nabla_\theta \log \pi_{\text{train}}(a|s) - \nabla_\theta \log \left( \sum_{k \in \mathcal{V}_S} \pi_{\text{train}}(k|s) \right) \\ \nonumber
    &= \nabla_\theta \log \pi_{\text{train}}(a|s) - \frac{\sum\limits_{k \in \mathcal{V}_S} \pi_{\text{train}}(k|s) \nabla_\theta \log \pi_{\text{train}}(k|s)}{Z_\theta(s)} \\ \nonumber
    &= \nabla_\theta \log \pi_{\text{train}}(a|s) - \mathbb{E}_{k \sim \pi_{\text{train}}^{\text{mp}}}[\nabla_\theta \log \pi_{\text{train}}(k|s)].
\end{align}
\end{proof}
Both standard and constrained gradients follow the form $\nabla_\theta z_a - b$. However, the constrained baseline $b$ is computed strictly over the safe set $\mathcal{V}_S$. This prevents the gradient from being polluted by the 100,000+ tail tokens where logits are numerically unstable.

\subsubsection{Bounded Optimization Bias}
Finally, we quantify the cost of this stabilization. The deviation between the constrained objective $J_{\text{mp}}$ and the original objective $J$ is bounded by the excluded probability mass.

\begin{proposition}[Bias Bound]
\label{prop:bias_bound}
The optimization bias satisfies:
\begin{equation}
    |J_{\text{mp}}(\theta) - J(\theta)| \leq R_{\max} \cdot T \cdot (1 - Z_{\min}),
\end{equation}
where $Z_{\min} = \min_s Z_\theta(s)$ is the minimum retained probability mass.
\end{proposition}

\begin{proof}
Using the total variation (TV) distance bound $|\mathbb{E}_P[f] - \mathbb{E}_Q[f]| \leq \|f\|_\infty \cdot \text{TV}(P, Q)$, we analyze the distance between the trajectory distributions. The divergence accumulates over time steps:
\begin{equation}
    \text{TV}(\pi_{\text{train}}^{\text{mp}}, \pi_{\text{train}}) \leq \sum_{t=1}^T \mathbb{E}_{s_t}[\text{TV}(\pi_{\text{train}}^{\text{mp}}(\cdot|s_t), \pi_{\text{train}}(\cdot|s_t))].
\end{equation}
For a single step, the TV distance is half the $L_1$ norm:
\begin{equation}
    \text{TV}(\pi_{\text{train}}^{\text{mp}}(\cdot|s), \pi_{\text{train}}(\cdot|s)) = \frac{1}{2}\sum_a |\pi_{\text{train}}^{\text{mp}}(a|s) - \pi_{\text{train}}(a|s)|.
\end{equation}
Splitting the sum into $a \in \mathcal{V}_S$ and $a \notin \mathcal{V}_S$:
\begin{align}
    \text{TV}&(\pi_{\text{train}}^{\text{mp}}(\cdot|s), \pi_{\text{train}}(\cdot|s)) \\ \nonumber
    &= \frac{1}{2} \left[ \sum_{a \in \mathcal{V}_S} \left(\frac{\pi_{\text{train}}(a)}{Z_\theta} - \pi_{\text{train}}(a)\right) + \sum_{a \notin \mathcal{V}_S} \pi_{\text{train}}(a) \right] \\ \nonumber
    &= \frac{1}{2} \left[ \left(\frac{1}{Z_\theta} - 1\right)Z_\theta + (1 - Z_\theta) \right] = 1 - Z_\theta(s).
\end{align}
Thus, the total bias is bounded by $R_{\max} \sum_t \mathbb{E}[1 - Z_\theta(s_t)]$, which simplifies to the proposition claim.
\end{proof}

With a typical threshold $\rho = e^{-13}$, the coverage $Z_\theta(s)$ is effectively 1.0 in nearly all contexts. Thus, DVP incurs negligible bias while completely eliminating the gradient noise from the vocabulary tail.

\subsection{Practical Implementation}
\label{sec:practical_implementation}

Implementing Dynamic Vocabulary Pruning requires addressing the non-differentiable nature of the set construction $\mathcal{V}_S(s)$. In practice, we treat the safe set as fixed during backpropagation (using \texttt{torch.no\_grad()}), a standard approximation that avoids differentiating through the discrete inclusion decision. We realize this constraint efficiently through logit masking.

\paragraph{Logit Masking Strategy.}
The constrained policy $\pi_{\text{train}}^{\text{mp}}$ is implemented by applying a mask directly in logit space. Let $z$ denote the original logits. We define the masked logits $z_{\text{mp}}$ as:
\begin{equation}
    z_{\text{mp}, k} = \begin{cases} z_k & \text{if } k \in \mathcal{V}_S(s) \\ -\infty & \text{otherwise} \end{cases}.
\end{equation}
In low-precision training (e.g., BF16), we substitute $-\infty$ with a large negative finite value (e.g., $-50.0$) to maintain numerical stability.

We rigorously verify that this masking strategy yields the correct policy and gradient updates.

\begin{proposition}[Masked Logit Correctness]
For all actions $a \in \mathcal{V}_S(s)$, the following hold:
\begin{enumerate}
    \item \textbf{Policy Equivalence:} $\text{softmax}(z_{\text{mp}})_a = \pi_{\text{train}}^{\text{mp}}(a|s)$.
    \item \textbf{Gradient Equivalence:} $\nabla_\theta \log(\text{softmax}(z_{\text{mp}})_a) = \nabla_\theta \log \pi_{\text{train}}^{\text{mp}}(a|s)$, under the fixed set assumption.
\end{enumerate}
\end{proposition}

\begin{proof}
\textbf{Part 1: Policy Equivalence.} Substituting the masked logits into the softmax definition:
\begin{align}
    \text{softmax}(z_{\text{mp}})_a &= \frac{e^{z_{\text{mp}, a}}}{\sum_j e^{z_{\text{mp}, j}}} \\ \nonumber
    &= \frac{e^{z_a}}{\sum_{k \in \mathcal{V}_S(s)} e^{z_k} + \sum_{l \notin \mathcal{V}_S(s)} e^{-\infty}} \\ \nonumber
    &= \frac{e^{z_a}}{\sum_{k \in \mathcal{V}_S(s)} e^{z_k}} \\ \nonumber
    &= \frac{\pi_{\text{train}}(a|s)}{Z_\theta(s)} = \pi_{\text{train}}^{\text{mp}}(a|s).
\end{align}

\textbf{Part 2: Gradient Equivalence.} The gradient of the standard log-softmax function for a logit vector $u$ is $\nabla_\theta \log(\text{softmax}(u)_a) = \nabla_\theta u_a - \mathbb{E}_{k \sim \text{softmax}(u)}[\nabla_\theta u_k]$. Applying this to $z_{\text{mp}}$:
\begin{align}
    \nabla_\theta &\log(\text{softmax}(z_{\text{mp}})_a) \\ \nonumber
    &= \nabla_\theta z_{\text{mp}, a} - \sum_{k} \text{softmax}(z_{\text{mp}})_k \nabla_\theta z_{\text{mp}, k}.
\end{align}
Since $z_{\text{mp}, k} = z_k$ for $k \in \mathcal{V}_S$ and the probability mass is zero otherwise, the expectation restricts to the safe set:
\begin{equation}
    = \nabla_\theta z_a - \sum_{k \in \mathcal{V}_S} \pi_{\text{train}}^{\text{mp}}(k|s) \nabla_\theta z_k = \nabla_\theta z_a - \mathbb{E}_{k \sim \pi_{\text{train}}^{\text{mp}}}[\nabla_\theta z_k].
\end{equation}
This recovers the contrastive gradient form derived in Proposition~\ref{prop:contrastive_form}.
\end{proof}

\paragraph{Hyperparameter Selection.}
The threshold $\rho$ governs the intrinsic bias-variance tradeoff of the method:
\begin{itemize}
    \item \textbf{High $\rho$ (e.g., $10^{-2}$):} Aggressive pruning significantly reduces gradient variance by enforcing a small safe set, but introduces higher optimization bias as valid reasoning tokens might be excluded.
    \item \textbf{Low $\rho$ (e.g., $10^{-20}$):} Lenient pruning retains a larger vocabulary, minimizing bias but potentially failing to filter the unstable tail tokens that cause divergence.
\end{itemize}
In our experiments, we found $\rho = e^{-13} \approx 2.26 \times 10^{-6}$ to be an optimal balance point. This value is sufficiently small to retain all linguistically and mathematically plausible tokens (preserving diversity) while strictly pruning the floating-point noise floor where the relative error $\frac{\delta}{p}$ explodes. We provide detailed implementation code in Appendix~\ref{app:implementation}.



\section{Experiments}
\label{sec:experiments}

We evaluate Dynamic Vocabulary Pruning on mathematical reasoning tasks, a domain where long-horizon generation makes training particularly susceptible to the accumulation of numerical errors. We conduct our experiments using the Qwen3-14B-Base model as the policy initialization and utilize the filtered DAPO dataset\footnote{\url{https://huggingface.co/datasets/Jiawei415/DPAO_filter/tree/main/train}} for training. We employ RLOO~\cite{ahmadian2024back} as the base algorithm with a group size of 16. To accommodate the deep reasoning chains required for these problems, we configure the environment with a maximum response length of 16,384 tokens. Training is performed in a full on-policy configuration, with both the rollout batch size and the mini-update batch size set to 32.

To rigorously benchmark our approach, we compare DVP against standard methods designed to handle policy mismatch. These include naive RLOO (which lacks explicit correction), Token-level Truncated Importance Sampling (TIS)~\cite{yao2025offpolicy}, and Masked Importance Sampling (MIS)~\cite{liu-li-2025-rl-collapse}. We set the threshold to 0.2 for both IS strategies. For our DVP implementation, we employ a conservative min-$p$ threshold of $\rho = e^{-13}$ (approx. $2.3 \times 10^{-6}$). This threshold is selected to surgically prune only the numerical noise tail while preserving the semantic richness of the distribution. We monitor training stability through the KL Divergence between the inference and training policies. We assess generalization performance on the AIME25 benchmark, reporting avg@32 scores to ensure robust evaluation.

\begin{figure}[htbp]
    \centering
    \includegraphics[width=0.95\linewidth]{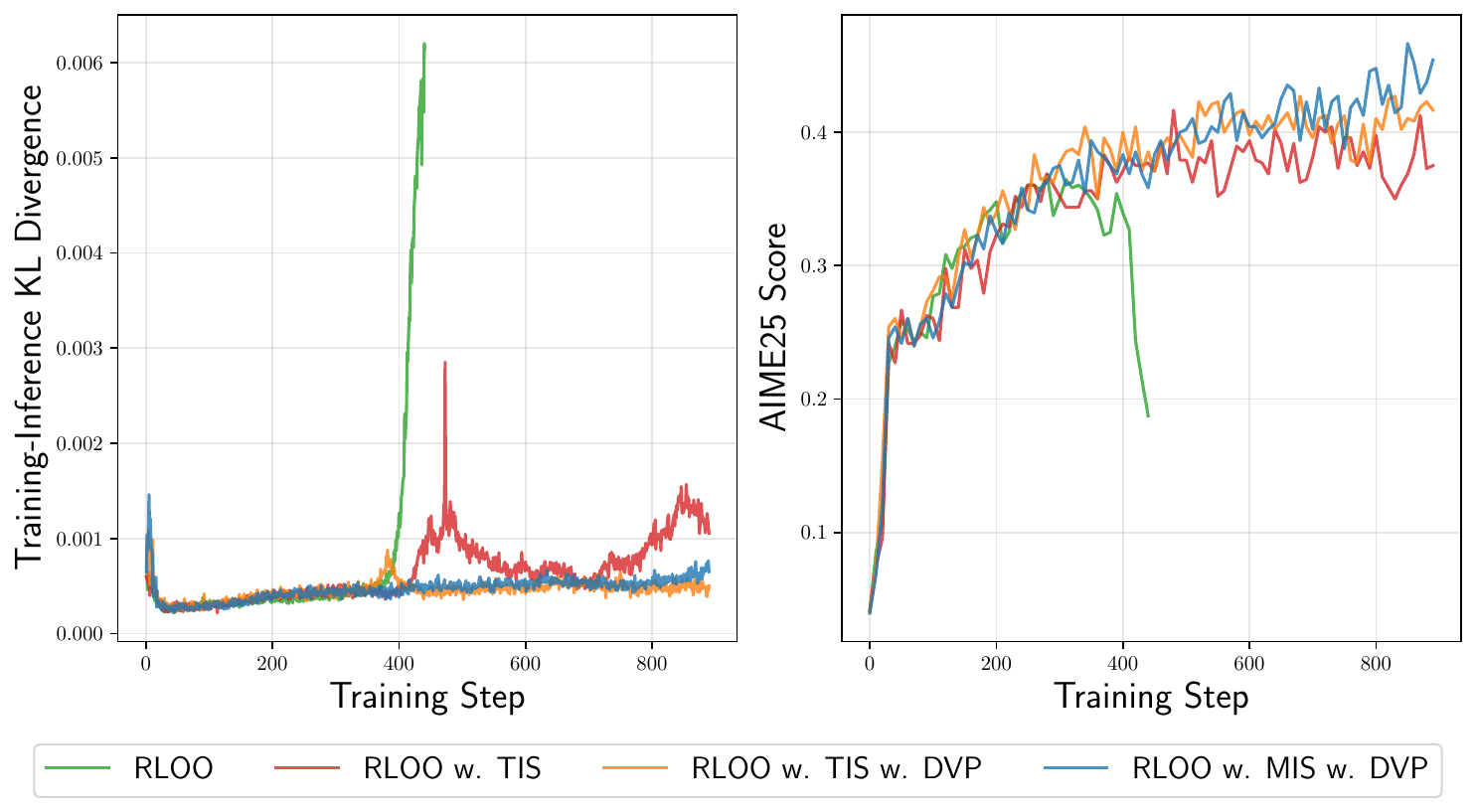}
    \caption{Comparison results on the Training-Inference KL Divergence and AIME25 Scores.}
    \label{fig:dvp}
\end{figure}

The empirical results, illustrated in Figure~\ref{fig:dvp}, provide strong evidence for the necessity of vocabulary control. Naive RLOO suffers from immediate and catastrophic training collapse, evidenced by the rapid explosion of the KL divergence in the early stages of training. This confirms that the training-inference mismatch is severe enough to derail learning when left unchecked. While TIS attempts to mitigate this instability by clipping extreme weights, it fails to address the root cause, resulting in persistent mismatch and suboptimal performance. In contrast, DVP effectively stabilizes the learning process. By excluding the numerically unstable tail from the objective, DVP maintains the stable KL divergence throughout the training trajectory. This numerical stability translates directly into performance gains: the model learns consistently without the degradation seen in compared baselines. Notably, the combination of MIS and DVP achieves the strongest results, yielding a 26.55\% improvement over the peak performance of naive RLOO. This suggests that DVP complements importance sampling techniques by removing the source of bias, allowing methods like MIS to more effectively manage the variance of the remaining valid tokens.

\section{Conclusion and Discussion}
\label{sec:conclusion}

In this work, we have established that training instability in LLM -RL is not merely a transient engineering artifact, but a structural inevitability arising from the divergence between high-throughput inference and high-precision training. Our theoretical diagnosis reveals the root cause: an asymmetric sensitivity to numerical noise. We prove that the log-probability mismatch bound scales with $(1-p)$, meaning that while high-probability tokens remain robust to aggressive inference optimizations, the low-probability ``tail'' becomes a region of unbounded divergence. Crucially, we demonstrated that errors in this regime are not zero-mean but systematically biased, accumulating over long reasoning chains to drive catastrophic training collapse.

To resolve this, we introduce {Dynamic Vocabulary Pruning (DVP)}. Rather than relying on reactive stabilization techniques—such as gradient clipping or weight truncation—which treat symptoms after they appear, DVP proactively redesigns the learning objective. By effectively performing ``surgery'' on the action space, DVP restricts gradient estimation to a dynamically defined safe support where the training and inference policies remain numerically consistent. Our empirical results on mathematical reasoning benchmarks confirm that this principled exclusion of the vocabulary tail is sufficient to recover stability, enabling the use of aggressive inference optimizations without sacrificing learning performance. In future work, we will explore adaptive thresholds ($\rho$) that scale dynamically with the model's entropy or the specific difficulty of the reasoning step, which could offer a finer control over the bias-variance tradeoff. 

DVP accounts for the learning objective under the numerical limitations of the underlying hardware. We trade a theoretical optimization bias for a massive reduction in practical variance. We believe that this class of methods—which align the mathematical requirements of training with the physical realities of inference—will be essential for the next generation of scalable, efficient, and robust AI systems.




\newpage






\nocite{langley00}

\bibliography{ref}
\bibliographystyle{icml2026}

\newpage
\appendix
\onecolumn

\section{Implementation Details}
\label{app:implementation}

We provide a PyTorch implementation that handles numerical stability issues common in low-precision training.

\begin{lstlisting}[language=Python, caption={PyTorch implementation of Min-$P$ Masking for DVP.}]
import math
import torch
import torch.nn.functional as F

def apply_minp_masking(logits: torch.Tensor, rho: float = math.exp(-13), 
                       mask_value: float = -50.0) -> torch.Tensor:
    """
    Applies Min-$P$ masking to logits for Dynamic Vocabulary Pruning.
    
    The safe set is defined as: V_S(s) = {a | pi(a|s) >= max(pi(s)) * rho}
    In log-space: logit(a) >= max(logit) + log(rho)
    
    Args:
        logits: Input logits tensor of shape (..., vocab_size).
        rho: The pruning threshold ratio (default: e^-13).
        mask_value: The value to assign to pruned logits. -50.0 is used 
                    instead of -inf to ensure stability in BF16/FP16.
    
    Returns:
        The masked logits tensor, preserving gradients for safe tokens.
    """
    # 1. Compute threshold in a detached context to treat the set V_S as fixed.
    with torch.no_grad():
        # Using max(dim=-1) is computationally efficient
        max_logits = logits.max(dim=-1, keepdim=True).values
        threshold = max_logits + math.log(rho)
        
        # Create boolean mask: True where logits should be PRUNED
        mask = logits < threshold

    # 2. Apply masking using torch.where to preserve the computation graph
    #    for unmasked elements.
    return torch.where(mask, torch.tensor(mask_value, dtype=logits.dtype, device=logits.device), logits)
\end{lstlisting}

To integrate DVP into existing RL pipelines (e.g., RLOO, PPO), we adhere to the following protocols:

\begin{enumerate}
    \item \textbf{Fixed Safe Set Approximation:} The threshold computation and mask generation occur strictly inside a \texttt{torch.no\_grad()} block. This ensures we do not attempt to backpropagate through the discrete decision of whether a token is included in the set, which aligns with our derivation treating $\mathcal{V}_S(s)$ as fixed.
    
    \item \textbf{Logit-Space Thresholding:} We compute the threshold as $z_{\max} + \log(\rho)$ rather than converting to probabilities. This avoids underflow issues for small probabilities and reduces computational overhead by skipping the expensive \texttt{exp()} and sum operations required for full softmax during mask creation.
    
    \item \textbf{Numerical Stability in BF16:} We use a finite negative number ($-50.0$) rather than $-\infty$ for masking. In `bfloat16`, $e^{-50} \approx 1.9 \times 10^{-22}$, which is effectively zero for softmax calculation but prevents \texttt{NaN} gradients that can arise from multiplying zero probabilities with infinite log-probs during the backward pass.
    
    \item \textbf{Importance Weight Computation:} When computing importance weights $\frac{\pi_{\text{train}}^{\text{mp}}}{\pi_{\text{infer}}^{\text{mp}}}$, we compute log-probabilities independently on both systems and exponentiate the difference. This is numerically safer than direct division.

    \item \textbf{Token Veto Mechanism:} We implement an additional safety filter for catastrophic divergence. If any token within a trajectory exhibits an importance weight $\frac{\pi_{\text{train}}^{\text{mp}}}{\pi_{\text{infer}}^{\text{mp}}} < 10^{-4}$, we identify the mismatch as irreparable and discard the entire sequence from the gradient update. To ensure a fair comparison, this veto mechanism is applied uniformly across all methods in our experiments, including the naive RLOO.
\end{enumerate}




\end{document}